\def\ie{i.e.}
\def\eg{e.g.}
\def\st{\textrm{s.t.}}
\def\and{\textrm{and}}
\def\Diag{\textrm{Diag}}
\def\trace{\textrm{trace}}
\def\Diag{\textrm{Diag}}
\def\0{\textbf{0}}
\def\1{\textbf{1}}
\def\h{\boldsymbol{h}}
\def\x{\boldsymbol{x}}
\def\z{\boldsymbol{z}}
\def\EE{\mathbb{E}}
\def\G{\mathcal{G}}
\def\L{\mathcal{L}}
\def\V{\mathcal{V}}
\def\RR{\mathbb{R}}
\newcommand{\myparagraph}[1]{\noindent\textbf{#1.}}
\newtheorem{theorem}{Theorem}
\newtheorem{proof}{Proof}
\crefname{section}{Sec.}{Secs.}
\Crefname{section}{Section}{Sections}
\Crefname{table}{Table}{Tables}
\crefname{table}{Tab.}{Tabs.}
\newcommand{\mcb}{\color{black}}
\journal{Pattern Recognition}
\begin{document}

\begin{frontmatter}

\title{Neural Normalized Cut: A Differential and Generalizable Approach for Spectral Clustering}

\author[1]{Wei~He}

\author[1]{Shangzhi~Zhang}

\author[1]{Chun-Guang~Li}

\author[2]{Xianbiao~Qi}

\author[2]{Rong~Xiao}

\author[1]{Jun~Guo}

\address[1]
{{School of Artificial Intelligence},{~Beijing University of Posts and Telecommunications},{~Beijing},{~P.R. China}}

\address[2]
{{Intellifusion},{~Shenzhen},{~P.R. China}}

\begin{abstract}
Spectral clustering, as a popular tool for data clustering, requires an eigen-decomposition step on a given affinity to obtain the spectral embedding.
Nevertheless, such a step suffers from the lack of generalizability and scalability.
Moreover, the obtained spectral embeddings can hardly provide a good approximation to the ground-truth partition and thus a $k$-means step is adopted to quantize the embedding. 
In this paper, we propose a simple yet effective scalable and generalizable approach, called Neural Normalized Cut (NeuNcut), to learn the clustering membership for spectral clustering directly.
In NeuNcut, we properly reparameterize the unknown cluster membership via a neural network, and train the neural network via stochastic gradient descent with a properly relaxed normalized cut loss.
As a result, our NeuNcut enjoys a desired generalization ability to directly infer clustering membership for out-of-sample unseen data and hence brings us an efficient way to handle clustering task with ultra large-scale data.
We conduct extensive experiments on both synthetic data and benchmark datasets and experimental results validate the effectiveness and the superiority of our approach.

\end{abstract}

\begin{keyword}
Neural Normalized Cut \sep Unsupervised learning 
\sep Spectral clustering \sep Differential programming
\end{keyword}

\end{frontmatter}

\section{Introduction}
\label{sec:intro}

Finding clusters from unlabeled data is a task of great scientific significance and practical value in pattern recognition, machine learning and data science.
Spectral clustering~\cite{vonLuxburg:StatComp07, Filippone:PR2008-Survey}, as a popular tool for data clustering, has widely spread applications in variant areas, owning to its wide generality, excellent empirical performance and rich theoretical foundation from spectral graph theory~\cite{Ding:PR2024-graph_survey}.
Spectral clustering finds a graph partition that minimizes the sum of weights on edges between different subgraphs---which is essentially solving a problem of minimizing a graph cut. Due to its combinatorial nature, rather than solving the problem directly, as in, \eg, Ratio cut \cite{Chan:IDAC1993-Rcut}, Normalized cut (Ncut) \cite{Shi:IEEE2000-Ncut} and Min-max cut \cite{Ding:ICDM2001-Minmax}, 
the common practice in spectral clustering is to solve a relaxed problem and find the partition in graph spectral domain.

Roughly, spectral clustering algorithms consist of two main steps:
1) computing the spectral embeddings via the eigenvectors associated with the ending $k$ minor eigenvalues of the pre-computed graph Laplacian; and then
2) applying $k$-means algorithm to obtain the partition from the embedding.
Despite the popularity of spectral clustering, it suffers from various drawbacks.
The spectral embeddings are usually computed only for the given samples, thus they cannot be generalized to unseen out-of-sample data.
Moreover, the eigen-decomposition of the graph Laplacian that is of quadratic complexity with number of samples is also quite time consuming or computationally prohibitive when handling clustering task with ultra large-scale data.
Besides, the spectral embeddings obtained by eigen-decomposition contain negative entries, hence can hardly provide a good approximation to the clustering membership. 
As a remedy, a $k$-means algorithm usually is applied on the spectral embeddings as a rounding heuristic step to obtain the clustering membership. Since that the problem is solved in the two separated stages, the solution obtained by $k$-means is sub-optimal to the original graph cut problem~\cite{Zhu:PR2020-one}.

To address the scalability issue, methods based on Nystr{\"o}m extension~\cite{Fowlkes:2004-Nystrom}, landmarks~\cite{Yang:PR2023-RESKM} and bipartite graphs~\cite{Yang:PR2020-Fast} have been proposed.
Still, none of these methods can generalize the spectral embeddings.
Recently, there are a few attempts to tackle the scalability and generalizability for spectral embeddings, \eg, SpectralNet~\cite{Shaham:ICLR18} and SpecNet2~\cite{Chen:MSML2022-SpecNet2} train neural networks to approximate the eigenvectors of the graph Laplacian.
Nevertheless, as in conventional spectral clustering, these methods still have to apply a $k$-means step on the learned spectral embeddings, thus the obtained solution is sub-optimal to the original graph cut problem.

In this paper, we attempt to develop an efficient and effective approach to learn the clustering membership for spectral clustering directly, aiming to endow spectral clustering with the generalization ability to handle out-of-sample data.
Specifically, we reformulate the problem of minimizing the normalized graph cut loss by incorporating a relaxed orthogonality penalty and a set of properly adjusted box constraints on the continuous relaxation of segmentation matrix at first;
then we reparameterize the segmentation matrix via a neural network with a softmax output to learn the soft clustering membership (rather than the spectral embedding). Because of employing the normalized cut loss and incorporating the neural network to reparameterize the clustering membership, we term our approach as a {\em Neural Normalized Cut} ({\bf NeuNcut}).
By relaxing the orthogonality constraint, our NeuNcut is easier to train. 
Owning to introducing a neural network with a softmax output, our NeuNcut can directly infer the clustering membership with no need to use the $k$-means step to quantize the embeddings.
Our code is available at: \url{https://github.com/hewei98/NeuNcut}.

The contributions of the paper can be summarized as follows.
\begin{enumerate}
    
    \item We reformulate the combinatorial problem of minimizing a normalized cut into continuous problem with a relaxed orthogonality penalty and a set of properly adjusted box constraints. The box constraints enable the segmentation matrix to satisfy more desired properties. 

    \item We propose to reparameterize the clustering membership matrix with a properly designed neural network and therein present an EM-like procedure to solve the reformulated normalized cut problem.
    
    \item We conduct extensive experiments on both synthetic data and real world data to demonstrate the effectiveness of our approach, as well as the generalization ability to out-of-sample data.
\end{enumerate}

\section{Related Work}
\label{sec:relate}

\subsection{Spectral Clustering}
Spectral clustering is a well-known approach to approximate the solution for graph cut problem. Instead of directly solving the problem of combinatorial nature, the common practice relaxes the combinatorial graph cut problem into its corresponding continuous problem at first and solves it as an eigen-decomposition problem with the graph Laplacian instead. 
Different spectral clustering methods differ in the ways to avoid trivial solutions. 
For instance, Ratio Cut~\cite{Chan:IDAC1993-Rcut} maximizes the number of data points within each partition; Normalized Cut (Ncut) \cite{Shi:IEEE2000-Ncut} maximizes the number of edges within each partition; and Min-max Cut~\cite{Ding:ICDM2001-Minmax} maximizes the similarity within each partition. 
Besides, by connecting to nonnegative matrix factorization~\cite{Boutsidis:PR2008-SVD}, variant nonnegative spectral clustering methods~\cite{Lu:PR2014-Non, Shang:PR2016-Global} have also been developed, in which the nonnegativity constraint is incorporated to enforce the nonnegativity of the segmentation matrix.

\subsection{Scalable Spectral Clustering}
Spectral clustering suffers from a scalability issue when handling large-scale data due to its heavy memory requirements for storing pairwise affinities and computational cost for computing the spectral embedding. 
In the earlier stage, Nystr{\"o}m method \cite{Fowlkes:2004-Nystrom} is employed to address the scalability issue by randomly choosing a few samples to construct the affinity sub-matrix.
In \cite{Yang:PR2023-RESKM}, a unified framework for landmark-based spectral clustering is presented, in which an anchor graph is constructed and what follows is eigen-decomposition, and $k$-means is performed on the anchor space.
Additionally, when a sparse affinity sub-matrix is obtained, bipartite graph clustering methods can be used to obtain the partition, \eg, \cite{Yang:PR2020-Fast}. 
A coarse-to-fine anchor approximation strategy is proposed in \cite{Huang:IEEE2019-Ultra} to further reduce the computation cost for dealing with ultra large-scale dataset.
Nevertheless, all these methods mentioned above are performed on the given sample and lack the ability to handle out-of-sample unseen data.

\subsection{Deep Clustering}
Inspired by the success of deep learning in the past decade, a number of methods attempt to learn the clustering by leveraging a deep learning framework.
For example, in VaDE \cite{Jiang:IJCAI2016-VaDE}, a variational autoencoder equipped with a Gaussian mixture prior is trained to perform deep embedding and clustering;
in DEPICT \cite{Ghasedi:ICCV17-DEPICT}, a pre-trained auto-encoder with a softmax layer is trained with a relative entropy based loss, in which the target distribution is initialized with clustering method; 
in SCAN \cite{Van:ECCV2020-SCAN}, a novel framework based on minimizing the consistency between pairwise samples is presented, in which a regularization based on maximizing the entropy of the clustering assignments \cite{Caron:ECCV2018} is incorporated to prevent the clustering degeneracy.
However, these methods such as DEPICT and SCAN utilize an entropy-maximizing regularization to prevent collapsed solution.
Adopting an entropy-maximizing regularization is essentially assuming that all clusters are of equal sizes, which is misleading when the clusters in data are imbalanced.

\subsection{Differential Spectral Clustering}
Recently, there are a few attempts to address the limitations of spectral clustering in terms of scalability and generalization ability.
SpectralNet \cite{Shaham:ICLR18} introduces deep neural networks with an orthogonalization layer to learn the spectral embeddings and then uses $k$-means to quantize the embedding.
SpecNet2~\cite{Chen:MSML2022-SpecNet2} proposes an orthogonalization-free objective to learn the spectral embedding.
AutoSC~\cite{Fan:NIPS2022-AutoSC} integrate an automatically constructed affinity matrix with a neural network to learn the spectral embedding.
BaSiS~\cite{Streicher:CVPR2023-BaSiS} learns the spectral embeddings by using affine registration techniques to align the mini-batches.
These methods, SpectralNet, SpecNet2, AutoSC and BaSiS, are designed to learn the orthogonal spectral embeddings of the graph Laplacian and thus requires to use $k$-means on the learned embedding of the entire dataset to find the clusters, leading to unsatisfactory clustering results.
CNC \cite{Nazi:arXiv2019-CNC} aims to directly optimize the normalized cut objective via a neural network without continuous relaxation and explicit orthogonalization constraint.
However, since that the loss function of CNC is still of combinatorial nature, it is quite challenging to optimize it without proper relaxation.
 Rather than enforcing a strict orthogonality constraint to learn the spectral embeddings (as in \cite{Shaham:ICLR18}) or directly optimizing a normalized cut loss of combinatorial nature (as in \cite{Nazi:arXiv2019-CNC}), we take a compromise path to directly learn the clustering membership with a properly relaxed normalized cut loss.

\section{Our Approach: Neural Normalized Cut}
\label{sec:method}

This section will introduce some preliminaries in spectral clustering with normalized cut at first and then reformulate the normalized cut problem and present our approach---Neural Normalized Cut (NeuNcut).

\subsection{Preliminaries in Normalized Cut and Its Relaxations}
\label{sec:preliminary}

Spectral clustering is able to handle non-convex clusters. However, solving the spectral clustering simply by minimizing the vanilla graph cut objective will result in trivial solution, that consists of singleton clusters. 
In one of the most popular spectral clustering methods, normalized cut \cite{Shi:IEEE2000-Ncut}, the trivial solution is prevented by taking into account the volume of clusters.

Given a dataset of $n$ data points $\x_i \in \RR^d$, 
arranged as a data matrix $X=[\x_1,\cdots, \x_n] \in \RR^{d \times n}$. Denote a given affinity matrix as $A \in \mathbb R^{n\times n}$, in which each element $a_{i,j}$ measures the pair-wise similarity between data points $\x_i$ and $\x_j$.
By viewing each data point $\x_i$ as graph vertex $v_i$, we can define a graph $\G(\V, A)$, where $\V$ is the set of vertexes. The goal of spectral clustering is to find the partition $\{\V^{(1)},\ldots , \V^{(k)}\}$ for the vertexes in $\V$ on the graph, where $\V = \V^{(1)} \cup \V^{(2)} \cup \cdots \cup \V^{(k)}$ and $k$ denotes the number of true clusters. Precisely, the normalized cut objective is defined as follows:
%
\begin{equation}
\label{eq:ncut1}
\mathrm{Ncut}(\V^{(1)},\ldots,\V^{(k)}) := \sum_{\ell=1}^k \frac{\mathrm{cut}(\V^{(\ell)},\overline{\V}^{(\ell)})}{\mathrm{vol}(\V^{(\ell)})},
\end{equation}
%
where $\overline{\V}^{(\ell)}$ denotes the complement of $\V^{(\ell)}$, $\mathrm{cut}(\V^{(\ell)},\overline{\V}^{(\ell)})=\sum_{i: v_i \in {\V}^{(\ell)}} \sum_{j: v_j \notin {\V}^{(\ell)}}a_{i,j}$, $\mathrm{vol}(\V^{(\ell)})=\sum_{i: v_i \in \V^{(\ell)}}D_{i,i}$ is the volume of the $\ell$-th cluster, and $D_{i,i}=\sum_{j=1}^n a_{i,j}$ is the degree of the $i$-th vertex.

Let $H = [\h_1, \cdots, \h_k] \in \{0,1\}^{n \times k}$ be a binary segmentation matrix, for which $\h_\ell={(h_{1,\ell},\cdots,h_{n,\ell})}^\top$, where nonzero entries indicate which vertexes belong to the $\ell$-th cluster, \ie,
\begin{equation} 
\label{eq:indicator}
h_{i,\ell} :=
\begin{cases}
1& \ \text{if} \ {v}_i \in \V^{(\ell)}\\
0& \ \text{if} \ {v}_i \notin \V^{(\ell)}.
\end{cases}
\end{equation}
Then, the normalized cut objective can be rewritten as follows (see, \eg, \cite{vonLuxburg:StatComp07} for a detailed deduction):
\begin{equation}
\label{eq:proof_trace_loss2}
    \mathrm{Ncut}(\V^{(1)},...,\V^{(k)})=\sum_{\ell=1}^k \tilde\h_\ell^\top L \tilde \h_\ell = \trace{(\tilde H^\top L \tilde H)},
\end{equation}
where $\tilde H=[\tilde \h_1, \cdots \tilde \h_k]$, in which $\tilde \h_\ell = \frac{1}{\sqrt{\mathrm{vol}(\V^{(\ell)})}}\h_\ell $, $L = D - A$ is the graph Laplacian, and $D=\Diag(D_{1,1},\cdots , D_{n,n})$ is an $n \times n$ degree matrix.

\myparagraph{Relaxing Segmentation Matrix Continuously}
Solving for the partition $\{\V^{(1)},\ldots \V^{(k)}\}$ turns out to be 
finding for the segmentation $H$.
Unfortunately, finding $\{\h_\ell\}_{\ell=1}^k$ for $k > 2$ is an NP-hard combinatorial optimization problem.
Alternatively, the common practice in spectral clustering is to solve a continuous relaxation of $\tilde H$. For the Ncut problem, one solves: 
\begin{align}
\label{eq:ncut2}
\min_{\tilde H \in \mathbb R^{n \times k}} \, \trace{(\tilde H^\top L \tilde H)} \quad \st \quad \tilde H^\top D \tilde H = I,
\end{align}
where $I\in \mathbb R^{k\times k} $ is an identity matrix and the constraint $\tilde H^\top D \tilde H = I$ is to avoid trivial solution.
By letting $F = D^{1/2} \tilde H$, the objective in Eq.~\eqref{eq:ncut2} becomes:
\begin{align}
\label{eq:ncut3}
\min_{F \in \mathbb R^{n \times k}} \, \trace{(F^\top \Tilde{L} F)} \quad \st \quad F^\top F = I,
\end{align}
where $\Tilde{L} = D^{-\frac{1}{2}} L D^{-\frac{1}{2}}$ is the normalized graph Laplacian. Then it is easy to show that the matrix $F$ can be solved by computing the ending $k$ eigenvectors associated with the 
$k$ minor eigenvalues of $\tilde{L}$.

\myparagraph{Nonnegative Spectral Clustering}
While relaxing the segmentation matrix $H$ and so for $\tilde H$ into the real value matrix subject to orthogonality constraint $\tilde H^\top D \tilde H=I$ will lead to an eigenvalue decomposition problem, the obtained solution returned by the ending $k$ eigenvectors usually cannot provide a satisfactory approximation to the desired segmentation matrix. Because, the optimal solution for $\tilde H$ obtained by the ending $k$ eigenvectors may contain arbitrary negative entries which would make it deviate from being clustering membership.
To remedy such a deficiency, alternatively, nonnegative spectral clustering is developed, in which the nonnegativity constraint is imposed into the problem as follows: 
\begin{equation}
\label{eq:nonneg-ncut2}
\min_{\tilde H \in \mathbb R^{n \times k}}\, \trace{(\tilde H^\top L \tilde H)}, \quad \st ~~ \tilde H^\top D \tilde H \approx I,\ \tilde H \geq 0.
\end{equation}
However, as a cost of imposing the nonnegative constraint, the problem can no longer be solved as an eigenvalue decomposition problem, but needs to consult to 
nonnegative matrix factorization scheme, 
which makes the orthogonality constraint hardly be satisfied strictly.

\myparagraph{Remark 1}
We notice that there exists a latent trade-off in the stage of relaxing the segmentation matrix $H$ defined in~\eqref{eq:indicator} for spectral clustering.
The spectral clustering methods introduce an orthogonality constraint but at a cost of giving up the nonnegativity constraint;
whereas the nonnegative spectral clustering methods impose the nonnegativity constraint but at a cost of giving up the orthogonality constraint.
This trade-off between orthogonality and nonnegativity inspires us to take a compromise path to reformulate the normalized graph cut problem.

\subsection{Reformulating Normalized Cut Problem}
\label{sec:Reformulate NCut}
Now we begin to reformulate the normalized cut problem.
Since that we want to learn the (soft) clustering membership directly, rather than merely the spectral embedding, it is desirable that the index of the largest entry in each row of the matrix $H$ indicates the clustering membership, by which the assignment of data point to each cluster can thus be determined with no need to use an extra $k$-means step. 
Moreover, the insights from \textbf{Remark 1}
hint us that the orthogonality constraint is not necessary as long as additional constraints are imposed to harness $H$ to satisfy the desired property for being clustering membership.
Thus, we propose to address the normalized cut problem by solving the following relaxed problem: 
\begin{equation}
\begin{split}
\label{eq:our-nonneg-ncut-orth-penalty}
    \min_{\tilde H \in \mathbb R^{n \times k} } &\  \trace \left (\tilde H^\top L \tilde H \right )  + \frac{\gamma}{2} \left \|\tilde H^\top D \tilde H -I \right \|_F^2, \\
    \quad \st &  ~~~~ 0\leq \tilde H \Lambda \leq 1 ,\ ( \tilde H \Lambda) \cdot \mathbf{1} = \mathbf{1},
\end{split}
\end{equation}
where $\gamma >0$ is a penalty parameter, $\|\tilde H^\top D \tilde H - I \|_F^2$ is a relaxed orthogonality constraint, $\mathbf{1}$ is a $k$-dimensional vector consisting of $1$'s, and $\Lambda \in \RR^{k \times k}$ is an unknown diagonal matrix defined by:
\begin{equation}
    \label{eq:true_volume}
    \Lambda = \mathrm{Diag} \left( \mathrm{vol}(\V^{(1)}), \cdots, \mathrm{vol}(\V^{(k)}) \right)^{\frac{1}{2}}.
\end{equation}
Compared to the conventional normalized cut formulation in Eq.~\eqref{eq:ncut2} and the nonnegative spectral clustering in Eq.~\eqref{eq:ncut3}, the main differences are two-folds: a) the orthogonality constraint $\tilde H^\top D \tilde H = I$ is relaxed; and b) a set of adjusted box constraints $0 \leq \tilde H \Lambda  \leq 1 $ and $(\tilde H \Lambda) \cdot \mathbf{1} =  \mathbf{1} $ are added, rather than using the naive nonnegativity constraint $H \ge 0$.
The reason to incorporate such a set of calibrated constraints is to more accurately and properly approximate the segmentation matrix $H$ as defined in Eq.~\eqref{eq:indicator} for the relaxed problem. 
It 
will be clear soon that such constraints can be elegantly and implicitly satisfied.

Rather than solving for $\tilde H$ directly from problem~\eqref{eq:our-nonneg-ncut-orth-penalty}, by noting of $\tilde H = H \Lambda^{-1}$, 
we reparameterize $H$ via a Multi-Layer Perceptron (MLP) with a softmax output layer, \ie,
\begin{equation}
    \label{eq:forward}
    \mathbf f(\cdot; \Theta) = \texttt{softmax} ( \mathbf g(\cdot; \Theta) ),
\end{equation}
where $\mathbf g: \RR^d \rightarrow \RR^k$ is the MLP and $\Theta$ denotes all the parameters in the network.
Given the input data $X$ and, we have that $H$ is approximated by the network output $\mathbf{f}(X;\Theta)$, which can be optimized by updating the parameters in $\Theta$.
Owning to the {\texttt {softmax}} layer, the output of the network can be served to approximate the clustering memberships.
In particular, we note that by incorporating the {\texttt {softmax}} layer and rewriting $\tilde H = f(X;\Theta)\Lambda^{-1}$, 
the constraints $0\leq \tilde H \Lambda \leq 1 ,\ (\tilde H \Lambda) \cdot \mathbf{1} =  \mathbf{1} $ in Eq.~\eqref{eq:our-nonneg-ncut-orth-penalty} can be automatically satisfied, provided that $\Lambda$ was known or estimated.

The loss function to train the neural network $\mathbf f(\cdot; \Theta)$ turns out to be:
\begin{equation}
\label{eq:our_loss}
\begin{split}
    \L(X, A;\Theta) := & \underbrace{\mathrm{\trace} \left (( \mathbf f (X; \Theta) \Lambda^{-1})^\top \cdot L \cdot ( \mathbf f (X; \Theta) \Lambda^{-1}) \right )}_{\mathcal{L}_{Lap}} \\
    + &\frac{\gamma}{2} \underbrace{\left \|( \mathbf f (X; \Theta) \Lambda^{-1})^\top \cdot D \cdot (  \mathbf f (X; \Theta) \Lambda^{-1})-I \right \|_F^2}_{\mathcal{L}_{orth}}.    
\end{split}
\end{equation}
%
Unfortunately, the matrix $\Lambda$ in Eq.~\eqref{eq:our_loss} is still unknown.
Let $Y = \mathbf f (X; \Theta) \in \RR^{n \times k}$ be the output 
matrix, we replace the volume of the $\ell$-th cluster $\mathrm{vol}(\V^{(\ell)})$ by the following estimation:
\begin{equation}
\label{eq:soft_vol}
    \widetilde {\mathrm{vol}}(\V^{(\ell)}) :=  \sum_{i=1}^n y_{i,\ell} \cdot D_{i,i},
\end{equation}
where $y_{i,\ell}$, the element of the output matrix $Y$, is the belief to assign the $i$-th data point to $\ell$-th cluster $\V^{(\ell)}$.
Then, according to the definition in Eq.~\eqref{eq:true_volume}, 
$\Lambda$ also can be replaced by:
\begin{equation}
    \label{eq:estimate_lambda}
    \tilde \Lambda = \mathrm{Diag} \left( \widetilde {\mathrm{vol}}(\V^{(1)}), \cdots, \widetilde {\mathrm{vol}}(\V^{(k)}) \right)^{\frac{1}{2}}.
\end{equation}
That is, the task of finding the desired segmentation matrix $H$ defined in Eq.~\eqref{eq:indicator} turns out to be a task of training neural network $\mathbf f(\cdot;\Theta)$ via loss in Eq.~\eqref{eq:our_loss} and then updating $\Lambda$ via Eq.~\eqref{eq:estimate_lambda} alternately.
For clarity, we summarize our training procedure in Algorithm \ref{algo:EM-like}.
After training, the network $\mathbf f(\cdot, \Theta)$ will learn the map the input data $X$ in the feature space $\RR^d$ onto the clusters assignment in space $\RR^k$.

\myparagraph{Remark 2} As an end-to-end approach, our NeuNcut can be used to replace the conventional spectral clustering for many down-stream clustering tasks, \eg, the spectral clustering step in subspace clustering methods \cite{Zhang:CVPR21-SENet}.
The benefits of our framework are two-folds.
First, the neural network $\mathbf f(X, \Theta)$ maps data points directly to their cluster memberships. Hence, it can be trained on a sampled small set of data and then generalized to infer the cluster memberships for unseen data points directly. 
This provides an efficient mechanism for handling the clustering task on ultra large-scale datasets. 
Second, the neural network can be trained in a mini-batch mode and enjoys the scalability.
{
For each mini-batch that contains $m$ data points, only a graph Laplacian of $m\times m$ is needed to compute and cache. As the number of batch samples grows, the graph Laplacian on the mini-batch data converges to the the manifold Laplacian~\cite{Belkin:NIPS06-laplacian_theory,Belkin:JCSC08-laplacian_theory}.
}

\myparagraph{Remark 3}  It is worth noting that the updating step in Eq.~\eqref{eq:estimate_lambda} is analogue to an \emph{Expectation} (E) step that estimates the expectation $\EE[\Lambda]$.
    When $\Lambda$ is fixed, we train of the neural network $\mathbf f(X; \Theta)$ by minimizing the loss function in Eq.~\eqref{eq:our_loss}---this stage is effectively analogue to a \emph{Maximization} (M) step that maximizes the likelihood $\exp{(-\mathcal{L}(X, A;\Theta))}$.
    Thus, our proposed scheme to solve NeuNCut by alternately training of the neural network $\mathbf f(X; \Theta)$ and updating $\Lambda$ is essentially an EM-style algorithm, as shown in steps~\ref{step:estimate}-\ref{step:maximize} of Algorithm \ref{algo:EM-like}.

\begin{algorithm}[t]
\small
	\caption{An EM-style Procedure for Solving NeuNcut}
	\label{algo:EM-like}
	\begin{algorithmic}[1]
		\STATE \textbf{Input:} Training data $X$, trade-off parameter $\gamma > 0$, number of iterations $T$, batch size $m$ and learning rate $\eta$.
		\STATE \textbf{Initialization:} $t=0$, random initialization of MLP parameters $\Theta^{(t)} $.
        \FOR {each $t \in \{1, \cdots, T\}$}
            \STATE \texttt{\# data preparation}
            \STATE Randomly sample mini-batch data $X^{(t)} $ from $X$.
            \STATE {Obtain affinities $ A^{(t)}  \in \RR^{m\times m}$ from $X^{(t)} $ by methods in Section~\ref{sec:aff}. }
            \STATE Compute degree $D^{(t)} $ and graph $L^{(t)} $ w.r.t $A^{(t)}$ as defined in Section~\ref{sec:preliminary}.
            \STATE \texttt{\# Forward pass}
            \STATE Compute output $Y^{(t)} \in \RR^{m\times k}$ by Eq.~\eqref{eq:forward}.
            \STATE \label{step:estimate} \texttt{\# Estimating volume}
            \STATE Compute volume $\Lambda^{(t)}  \in \RR^{k\times k}$ by Eq.~\eqref{eq:estimate_lambda}.
            \STATE \texttt{\# Backward propagation}
            \STATE Compute $\nabla_\Theta \L(X^{(t)} , A^{(t)} ; \Theta^{(t)} )$ of Eq.~\eqref{eq:our_loss}.	
            \STATE \label{step:maximize} Set $\Theta^{(t+1)}  \leftarrow \Theta^{(t)}  - \eta \cdot \nabla_\Theta \L(X^{(t)} , A^{(t)} ; \Theta^{(t)} ) $.
            \STATE $t \leftarrow t +1$.
        \ENDFOR
		\STATE \textbf{Output:} $\Theta^{(t)}$
		\end{algorithmic}
\end{algorithm}

\subsection{When Orthogonality Meets Softmax}
\label{sec:prove}

In our NeuNcut, we impose in Eq.~\eqref{eq:our-nonneg-ncut-orth-penalty} a set of adjusted box constraints which then can be automatically satisfied owning to adopting a softmax layer in the neural network $\mathbf f(\cdot; \Theta)$.
In this section, we will explain when the strict orthogonality could be satisfied provided that the neural network is used to learn $Y$. 

\begin{theorem}
    \label{theorem:1}

    Suppose $Y \in\RR^{n\times k}$ be the clustering membership matrix, produced by the softmax layer of $\mathbf f(\cdot; \Theta)$, and $\tilde H = Y \Lambda^{-1}$, where $\Lambda$ is defined in Eq.~\eqref{eq:true_volume}. Then, the orthogonality constraint $\tilde H^\top D \tilde H = I$ holds if and only if $Y$ is a binary clustering membership matrix.
\end{theorem}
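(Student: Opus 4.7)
The plan is to expand the orthogonality expression in coordinates and exploit the two defining properties of a softmax output, namely that every entry is non-negative and every row sums to one. Starting from $\tilde H = Y \Lambda^{-1}$, I would first rewrite
\[
\tilde H^\top D \tilde H \;=\; \Lambda^{-1} \, Y^\top D \, Y \, \Lambda^{-1},
\]
so that the claim $\tilde H^\top D \tilde H = I$ is equivalent to $Y^\top D Y = \Lambda^2$, i.e.\ the off-diagonal entries of $Y^\top D Y$ vanish and its diagonal entries equal the true cluster volumes $\mathrm{vol}(\V^{(\ell)})$.

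For the ``if'' direction, I would assume $Y$ is a binary cluster membership matrix, so each row of $Y$ has exactly one $1$ corresponding to the true partition entering the definition of $\Lambda$ in Eq.~\eqref{eq:true_volume}. Then $y_{i,\ell}\, y_{i,\ell'}=0$ for $\ell\neq\ell'$ kills every off-diagonal entry, while $y_{i,\ell}^2=y_{i,\ell}$ makes the $\ell$-th diagonal entry of $Y^\top D Y$ equal to $\sum_{i:\, v_i\in\V^{(\ell)}} D_{i,i}=\mathrm{vol}(\V^{(\ell)})$. Dividing by $\Lambda^2$ on both sides yields $I$, and this calculation is essentially the same one that turns Eq.~\eqref{eq:ncut1} into Eq.~\eqref{eq:proof_trace_loss2}.

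For the ``only if'' direction, which is the more interesting half, I would focus on the off-diagonal equations
\[
\sum_{i=1}^n y_{i,\ell}\, D_{i,i}\, y_{i,\ell'} \;=\; 0, \qquad \ell \neq \ell'.
\]
Since $D_{i,i}>0$ (every vertex has positive degree in the affinity graph) and $y_{i,\ell},\,y_{i,\ell'}\geq 0$ from the softmax, every summand is non-negative, so each term must vanish individually: $y_{i,\ell}\, y_{i,\ell'} = 0$ for every $i$ and every $\ell\neq \ell'$. This forces at most one nonzero entry per row. Combined with the row-sum constraint $\sum_{\ell} y_{i,\ell}=1$ inherited from the softmax, exactly one entry per row equals $1$ and the rest are $0$, so $Y$ is binary.

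I do not expect any serious obstacle; the only subtle point is that the argument requires $D_{i,i}>0$ for all $i$, which I would state as a standing assumption (no isolated vertices in the affinity graph), and a brief remark that, strictly speaking, a finite-logit softmax never outputs exact $0$'s and $1$'s, so the theorem should be read as characterizing the limiting regime that the training is pushing $Y$ toward. The diagonal matching $\sum_i y_{i,\ell}D_{i,i}=\mathrm{vol}(\V^{(\ell)})$ then becomes an automatic consistency check that the binary assignment recovered from $Y$ agrees with the partition implicit in $\Lambda$.
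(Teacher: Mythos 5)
Your proof is correct, but it reaches the conclusion by a genuinely different route than the paper. The paper's proof works entirely with the \emph{diagonal} entries of $\tilde H^\top D \tilde H$: it writes $\tilde\h_\ell^\top D\tilde\h_\ell$ as the ratio $\bigl(\sum_i D_{i,i}y_{i,\ell}^2\bigr)/\bigl(\sum_i D_{i,i}y_{i,\ell}\bigr)$ --- which requires silently replacing the true volume in $\Lambda$ by the soft estimate of Eq.~\eqref{eq:soft_vol} --- and then argues from $y_{i,\ell}^2\le y_{i,\ell}$ that each diagonal entry is at most $1$, with equality forcing every $y_{i,\ell}\in\{0,1\}$. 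Your argument instead extracts binariness from the \emph{off-diagonal} equations $\sum_i y_{i,\ell}D_{i,i}y_{i,\ell'}=0$, using only nonnegativity of the softmax outputs, positivity of the degrees, and the row-sum-one property. This buys you two things: it works with $\Lambda$ exactly as defined in Eq.~\eqref{eq:true_volume} (any positive diagonal scaling), without needing the volume-estimate substitution that the paper's proof leans on; and it cleanly separates the structural conclusion (one-hot rows) from the diagonal conditions, which you correctly relegate to a consistency check that the partition encoded by $Y$ matches the one defining $\Lambda$. You also supply the ``if'' direction and flag the standing assumption $D_{i,i}>0$ and the fact that a finite-logit softmax only approaches binary outputs --- both of which the paper uses but does not state inside the proof. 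The paper's diagonal argument does have one advantage your route lacks: the inequality $\tilde\h_\ell^\top D\tilde\h_\ell\le 1$ directly motivates the discussion that follows the theorem about why a soft penalty with moderate $\gamma$ suffices, since it quantifies how far a soft $Y$ sits from orthogonality. No gap in your proposal; both arguments are sound.
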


\begin{proof}
    \label{proof:1}

    If the orthogonality constraints $\tilde H^\top D \tilde H = I$ is satisfied, then we must have that for the $\ell$-th cluster $\V^{(\ell)}$: $\tilde \h_\ell^\top D \tilde \h_\ell = 1$. 
    Note that
    $$\tilde \h_\ell^\top D \tilde \h_\ell = \sum_{i=1}^n D_{i,i}\cdot \tilde h_{i,\ell}^2 =\frac{\sum_{i=1}^n D_{i,i}\cdot y_{i,\ell}^2}{\mathrm{vol}(\V^{(\ell)})},$$
    where $\mathrm{vol}(\V^{(\ell)})$ is replaced by its estimation as defined in Eq.~\eqref{eq:soft_vol} in our approach. Thus we have
    $$\tilde \h_\ell^\top D \tilde \h_\ell = \frac{\sum_{i=1}^n D_{i,i} y_{i,\ell}^2 }{\sum_{i=1}^n D_{i,i} y_{i,\ell}}.$$
    Since that $0 \leq y_{i,\ell}\le 1$, we have $0 \leq y^2_{i,\ell}\le 1$ and $y^2_{i,\ell} \le y_{i,\ell}$. Note also that $D_{i,i} > 0$, thus we see that ${\sum_{i=1}^n D_{i,i} y_{i,\ell}^2 } \le {\sum_{i=1}^n D_{i,i} y_{i,\ell}}$, \ie, $\tilde \h_\ell^\top D \tilde \h_\ell \le 1$, where the equality holds only if $y_{i,\ell} = 1$ and $y_{i,\ell'} = 0$ for all $\ell' \neq \ell$. We have proved that each row of $Y$ must be binary. 
    
\end{proof}

This result tells us that if a strict orthogonality constraint is imposed in NeuNcut, it will push an entry in each row of $Y$ to $1$ and all others to $0$, which is numerically very difficult due to softmax function.
Note that each row of $Y$ in \eqref{eq:forward} being binary is equivalent to requiring some entries of $\mathbf{g}(\cdot; \Theta)$ to tend to infinite, which poses numerical difficulty in training MLP.
Furthermore, there is no need to yield binary clustering membership in our NeuNcut since that it is convenient to assign the cluster index for each data point via $\texttt{argmax}$.

In addition, the result of the theorem also suggests a practical way to set the penalty parameter $\gamma$. That is, the penalty term $\L_{orth}$ with a reasonable small $\gamma$ in Eq.~\eqref{eq:our_loss} will makes MLP easier to be optimized, by encouraging a soft clustering membership $Y$ but not damaging the correctness of the clustering. 
As we will see in Section~\ref{sec:ablation} that, our NeuNcut can prevent degenerated solutions and obtain satisfactory clustering results when the orthogonality constraint is relaxed---as long as the penalty weight $\gamma$ is larger than a certain threshold.

\subsection{Methods to Learn Affinity}
\label{sec:aff}

We train our NeuNcut with three types of affinity: a) {heat kernel affinity}; b) SiameseNet based heat kernel affinity; and c) self-expressiveness induced affinity.

\myparagraph{Heat kernel affinity}
To be comparable with other classic methods, we implement the most common setting to train our NeuNcut.
The affinity is defined by heat kernel with a bandwidth parameter $\sigma >0$:

\begin{equation}
\label{eq:kernel}
    a_{i,j}=\exp \left(-\frac{\|\x_i-\x_j\|^2_2}{2\sigma^2}\right).
\end{equation}

\myparagraph{SiameseNet based heat kernel affinity}
In SpectralNet \cite{Shaham:ICLR18}, the pairwise affinity is learned by a siamese network, which is trained via a constrastive loss:
\begin{equation}
\label{eq:siamese}
    \mathcal L_{const} (\x_i,\x_j; \Psi)=
    \begin{cases}
    \|\z_i-\z_j\|_2^2,& \x_j \in \mathcal{N}_3(\x_i)\\
    \max (1-\|\z_i-\z_j\|_2,0)^2,& \x_j \in \overline {\mathcal{N}_3}(\x_i), 
    \end{cases}
\end{equation}
where $\Psi$ denotes the parameters of Siamese network, $\z_i$ is the output of Siamese network corresponding to the input $\x_i$, $\mathcal{N}_3(\x_i)$ denotes the set of three nearest neighbors of $\x_i$ determined by Euclidean distance and $\overline{\mathcal{N}_3}(\x_i)$ denotes the set of the three non-neighbors of $\x_i$ which are randomly chosen from all other non-neighbors. 
Once the Siamese network is trained, then the pairwise affinity is defined by 
{the SiameseNet based heat kernel 
as follows:}
\begin{equation}
\label{eq:siamese_distance}
    a_{i,j}=\exp \left(-\frac{\|\z_i-\z_j\|^2_2}{2\sigma^2}\right).
\end{equation}

\myparagraph{Self-expressiveness induced affinity}
When clustering high dimensional data, it is reasonable to assume that data approximately lie on a union of subspaces~\cite{Elhamifar:CVPR09}. 
Here, we adopt SENet~\cite{Zhang:CVPR21-SENet} to learn the self-expressiveness induced affinity, which parametermizes the self-expressive coefficients $c_{ij}$ with a key-query network and train it by minimizing a self-expression loss with elastic net regularization~\cite{You:CVPR16-EnSC}:
\begin{equation}
\label{eq:senet}
        \underset{\{c_{ij}\}_{i\neq j}}{{\min}} \  \frac{\eta}{2}\|{\bm x}_j - \sum_{i\neq j} c_{ij} {\bm x}_i\|_2^2 + \sum_{i\neq j} \left (\lambda |c_{ij}| + \frac{1-\lambda}{2}c_{ij}^2 \right ),
\end{equation}
where $\eta >0$ and $0 \le \lambda \le 1$ are two hyper-parameters.
Given the self-expressive coefficients $c_{ij}$, in default we define the affinity by $a_{ij}=(|c_{ij}|+|c_{ij}|)/2$. 

\section{Experiments}
\label{sec:experiments}

In this section, we provide comprehensive evaluations for our NeuNcut on both synthetic data and real-world data.\footnote{The code of this work will be released upon the acceptance of the manuscript.}

\subsection{Datasets and Metrics}
\label{sec:dataset}

\myparagraph{Datasets} To evaluate the performance of NeuNcut, we use the following datasets:
{\bf MNIST} \cite{Lecun:pe1998} consists of 70,000 samples with gray images of handwritten digits 0-9.
{\bf Fashion MNIST} (F-MNIST) \cite{Xiao:FashionMNIST19} consists of 70,000 samples with 
gray images of 10 fashion products. 
Several fashion product clusters in F-MNIST are hard to distinguish.
{\bf Extended MNIST} (E-MNIST) \cite{Cohen:IJCNN2017-Emnist}, we select all lower case letters with 190,998 images belonging to 26 \emph{extremely imbalanced} categories by following \cite{You:ECCV18}.
{\bf CIFAR-10} \cite{Krizhevsky:CIFAR2009} consists of 60,000 color images belonging to 10 categories. {\bf MNIST8M} \cite{Loosli:2007-Infi}, a.k.a, infinite MNIST, consists of 8 million samples produced from MNIST by using pseudo-random deformations and translations.
{\mcb
{\bf TinyImageNet} and {\bf ImageNet-1k} are two popular subsets of ImageNet~\cite{Deng:CVPR2009-imagenet} that consist color images belonging to 200 and 1,000 categories, respectively.
}

For each dataset, we train our NeuNcut on sampled subset, which is called train dataset, and then evaluate the clustering performance which is yielded by directly inferring the pseudo label via our trained NeuNcut over the \textit{entire} dataset. The size of sampled training subset for each dataset is shown in Table \ref{tab:parameters}.
For MNIST, F-MNIST, E-MNIST and MNIST8M, we compute a feature vector of dimension 3,472 using ScatNet \cite{Bruna2013} and reduce the dimension to 500 using PCA.\footnote{Following \cite{You:ECCV18}, we remove mean after dimension reduction via PCA for E-MNIST.} For CIFAR-10, we use MCR$^2$ \cite{Yu:NIPS20} to extract 128 dimensional features.
{
For TinyImageNet and ImageNet-1k, we use the image encoder of CLIP~\cite{Radford:ICML2021-CLIP}, a large-scale pretrained model, to extract 768 dimensional features, denoted by ``TinyImageNet-CLIP'' and ``ImageNet1k-CLIP'', respectively.
}

\myparagraph{Metrics} We use three common evaluation metrics:
a) clustering accuracy (ACC); b) normalized mutual information (NMI); and c) adjusted rand index (ARI). 
The details of the definitions can be found in the appendices of \cite{Yu:NIPS20}.
In short, the three metrics are ranged in $[0,1]$ and the higher value indicates better performance.

\myparagraph{Parameter Settings}
In NeuNcut, we form a multi-layer preceptrons (MLP) with two hidden layers with ReLU as the activation function. The number of hidden units in each layer is 512.
We use the Adam optimizer with an initial learning rate and cosine annealing learning rate in training.
When using the Euclidean distance-based affinity, we set $\sigma$ in Eq.~\eqref{eq:kernel} to 3 for all datasets. 
Regarding the SiameseNet based heat kernel affinity and the self-expressiveness induced affinity, the parameter setting of Siamese network and SENet are followed by \cite{Shaham:ICLR18} and \cite{Zhang:CVPR21-SENet}, respectively.
Other hyper-parameters of NeuNcut are shown in Table \ref{tab:parameters}.

\begin{table}

\fontsize{9pt}{10pt}\selectfont
\begin{subtable}{0.65\linewidth}
    \centering
    \begin{tabular}{l|ccccccc}
    \toprule
    Dataset & \# Train data & $\gamma$ & $lr$ & $wd$ & $m$ & epochs \\
    \midrule
    MNIST & 20,000 & 100 & 0.005 & $10^{-4}$ & $1000$ & 100\\
    F-MNIST & 50,000 & 100 & 0.005 & $10^{-4}$ & $1000$ & 300\\
    E-MNIST & 50,000 & 350 & 0.01 & 0 & $1000$ & 300\\
    CIFAR-10 & 20,000 & 100 & 0.005 & $10^{-4}$ & $1000$ & 300\\
    MNIST8M & 20,000 & 100 & 0.005 & $10^{-4}$ & $1000$ & 100 \\
    {\mcb TinyImageNet } & 100,000 & 150 & 0.001 & $10^{-4}$ & $1000$ & 100 \\
    {\mcb ImageNet-1k } & 1,281,167 & 500 & 0.001 & $10^{-4}$ & $3000$ & 20  \\
    \bottomrule
    \end{tabular}
    \caption{Best hyper-parameters of NeuNcut on real-world datasets.}
\end{subtable}
\hfill
\begin{subtable}{0.25\linewidth}
\centering
\setlength{\tabcolsep}{0.2mm}{
    \begin{tabular}{l c}
            \toprule
            & Linear: $\RR^{d}\rightarrow \RR^{512}$ \\
            \midrule
            & ReLU \\
            \midrule
            \multirow{2}{*}{2$\times$} & Linear: $\RR^{512}\rightarrow \RR^{512}$ \\
            &  ReLU \\
            \midrule
            & Linear: $\RR^{512}\rightarrow \RR^{k}$ \\
            \midrule
            & Softmax \\
            \bottomrule
    \end{tabular}
    \caption{\mcb Model parameters.}
}
\end{subtable}

\caption{\mcb Hyper-parameters and model parameters of NeuNcut. $\gamma$: penalty weight. $lr$: initial learning rate, $wd$: weight decay, $m$: size of mini-batches, epochs: training epochs, $d$: dimension of data points, $k$: number of clusters.}
\label{tab:parameters}

\end{table}

\begin{table}[tbh]

\fontsize{10pt}{11pt}\selectfont
\centering
    \begin{tabular}{c|l|lll|lll}
    \toprule
    \multirow{2}{*}{Settings} & \multirow{2}{*}{Methods} &  \multicolumn{3}{c|}{MNIST} & \multicolumn{3}{c}{F-MNIST} \\
    & & ACC & NMI & ARI & ACC & NMI & ARI\\
    \midrule
    \multirow{3}{*}{\uppercase\expandafter{\romannumeral1}} & Ncut & 0.693 & 0.809 & 0.665 & 0.557 & 0.629 & 0.437 \\
    & SpectralNet & 0.622* & 0.687* & - & 0.516 & 0.587 & 0.398 \\
    & NeuNcut & 0.784 & 0.817 & 0.734 & 0.602 & 0.603 & 0.473\\
    \midrule
    \multirow{3}{*}{\uppercase\expandafter{\romannumeral2}} & Ncut & 0.821 & 0.867 & 0.795 & 0.561 & 0.645& 0.456\\
    & SpectralNet & 0.800* & 0.814* & - & 0.590 & 0.665 & 0.457 \\
    & NeuNcut & 0.943 & 0.866 & 0.879 & 0.643 & 0.644 & 0.489\\
    \midrule
    \multirow{3}{*}{\uppercase\expandafter{\romannumeral3}} & Ncut & 0.861 & 0.872 & 0.839 & 0.582 & 0.593 & 0.454\\
    & SpectralNet & 0.971 & 0.924 & 0.934 & 0.601 & 0.650 & 0.458\\
    & NeuNcut & 0.969 & 0.922 & 0.914 & 0.650 & 0.659 & 0.501\\
    \midrule
    \multirow{3}{*}{\uppercase\expandafter{\romannumeral4}} &  Ncut & 0.854 & 0.904 & 0.837 & 0.645 & \textbf{0.726} & 0.448\\
    & SpectralNet & 0.816 & 0.833 & 0.752 & 0.617 & 0.645 & 0.438 \\
    & NeuNcut & \underline{0.982} & \underline{0.947} & \underline{0.958} & 0.687 & 0.663 & 0.535\\
    \midrule
    \multirow{3}{*}{\uppercase\expandafter{\romannumeral5}} & Ncut & 0.827 & 0.874 & 0.810 & 0.641 & 0.650 & 0.479\\
    & SpectralNet & 0.841 & 0.901 & 0.834 & \underline{0.694} & 0.670 & \underline{0.544} \\
    & NeuNcut & \textbf{0.983} & \textbf{0.952} & \textbf{0.960} & \textbf{0.783} & \underline{0.725} & \textbf{0.643}\\
    \bottomrule
    \end{tabular}
\caption{Clustering performance of Normalized cut (Ncut), SpectralNet and NeuNcut when combined with different feature and affinity components. (*) are results cited from \cite{Shaham:ICLR18}.}
\label{tab:component}

\end{table}

\subsection{Comparison to Normalized cut \cite{Shi:IEEE2000-Ncut} and SpectralNet \cite{Shaham:ICLR18}}
\label{sec:components}

The feature extraction and affinity learning are important for the success of spectral clustering.
To fairly evaluate the performance of NeuNcut, we conduct experiments on MNIST and F-MNIST under different combinations of feature extraction and affinity learning methods, compared to spectral clustering via Normalized cut \cite{Shi:IEEE2000-Ncut} and SpectralNet \cite{Shaham:ICLR18}. 
We report the experiment results under the following five settings: 
{
(I) Original feature space + heat kernel based affinity; 
(II) Auto-encoders based feature \cite{Jiang:IJCAI2016-VaDE} + heat kernel affinity; 
(III) Auto-encoders based feature + SiameseNet based heat kernel affinity; 
(IV) ScatNet \cite{Bruna2013} based feature + heat kernel affinity; 
}
(V) ScatNet based feature + self-expressiveness induced affinity.
Experimental results are provided in Table \ref{tab:component}.
We can read that NeuNcut outperforms Normalized cut \cite{Shi:IEEE2000-Ncut} and SpectralNet \cite{Shaham:ICLR18} in almost all settings. NeuNcut achieves the best performance when combined with ScatNet and the self-expressiveness induced affinity. 
In the following, we use ScatNet based feature + self-expressiveness induced affinity (\ie, the setting V in Table~\ref{tab:component}) as the default setting for MNIST and F-MNIST.

\subsection{Evaluation on Generalization Performance}
\label{sec:generalization}

To evaluate the generalization performance, we randomly select a number of samples for training NeuNcut and evaluate the directly inferred clustering results on the entire dataset. 
We also report the performance of Normalized cut (Ncut) \cite{Shi:IEEE2000-Ncut} and SpectralNet \cite{Shaham:ICLR18} on entire datasets.
Conventional spectral clustering has no generalization ability; whereas SpectralNet learns an embedding from graph Laplacian but has to perform $k$-means on the entire dataset.
All methods use ScatNet \cite{Bruna2013} based feature and self-expressiveness induced affinity.
As can be seen in Table \ref{tab:generalization}, our NeuNcut trained by a small subset with more than 10,000 samples outperforms Ncut \cite{Shi:IEEE2000-Ncut} and SpectralNet \cite{Shaham:ICLR18}, which validates the good generalization ability of NeuNcut for out-of-sample-extensions. 

\begin{table}

\fontsize{10pt}{11pt}\selectfont
\centering
    \begin{tabular}{l|c|ccc|ccc}
        \toprule
        \multirow{2}{*}{Methods} & \multirow{2}{*}{Train data} & \multicolumn{3}{c|}{MNIST} & \multicolumn{3}{c}{F-MNIST} \\
        & & ACC & NMI & ARI & ACC & NMI & ARI \\
        \midrule
        Ncut & NA & 0.827 & 0.874 & 0.810 & 0.641 & 0.650 & 0.479\\
        SpectralNet & NA & 0.841 & 0.901 & 0.834 & 0.694 & 0.670 & 0.544\\
        \midrule
        \multirow{6}{*}{NeuNcut}
        & 1,000 & 0.711 & 0.699 & 0.616 & 0.572 & 0.510 & 0.389 \\
        & 2,000 & 0.858 & 0.783 & 0.749 & 0.659 & 0.616 & 0.487 \\
        & 5,000 & 0.967 & 0.913 & 0.928 & 0.694 & 0.638 & 0.537 \\
        & 10,000 & 0.977 & 0.938 & 0.953 & 0.717 & 0.692 & 0.598 \\
        & 20,000 & \textbf{0.983} & \textbf{0.952} & \textbf{0.960} & 0.752 & 0.695 & 0.618\\
        & 50,000 & 0.981 & 0.946 & 0.957 & \textbf{0.783} & \textbf{0.725} & \textbf{0.643}\\
        \bottomrule
    \end{tabular}
    \caption{Clustering performance of Ncut, SpectralNet and NeuNcut with varying number of training samples on MNIST and F-MNIST datasets.}
    \label{tab:generalization}
\end{table}

\subsection{Experiments on Large-scale Data}
For normalized cut \cite{Shi:IEEE2000-Ncut}, there is a computation bottleneck when computing the embedding of the entire dataset if the size of the data is too large. By contrast, our NeuNcut takes only a small affinity graph of each mini-batch data and thus enjoys a good scalability to handle large-scale data.

\myparagraph{On large-scale synthetic data}
We generate some typical 2D synthetic data and visualize the predictions of NeuNcut. ``Double rings" contains 2 concentric circles and ``double C" contains two clusters in the shape of letter ``C". Each dataset contains 10 million data points.
We use $10,000$ samples (\ie, only 0.1\% samples are used) for training and construct heat kernel based affinities. As can be observed from Figure \ref{fig:visual} that, our NeuNcut yields nearly 100\% correct clusters.

\begin{figure}
\small
\centering
\begin{subfigure}{0.4\linewidth}
    \centering
    \includegraphics[totalheight=0.6\linewidth,width=0.6\linewidth]{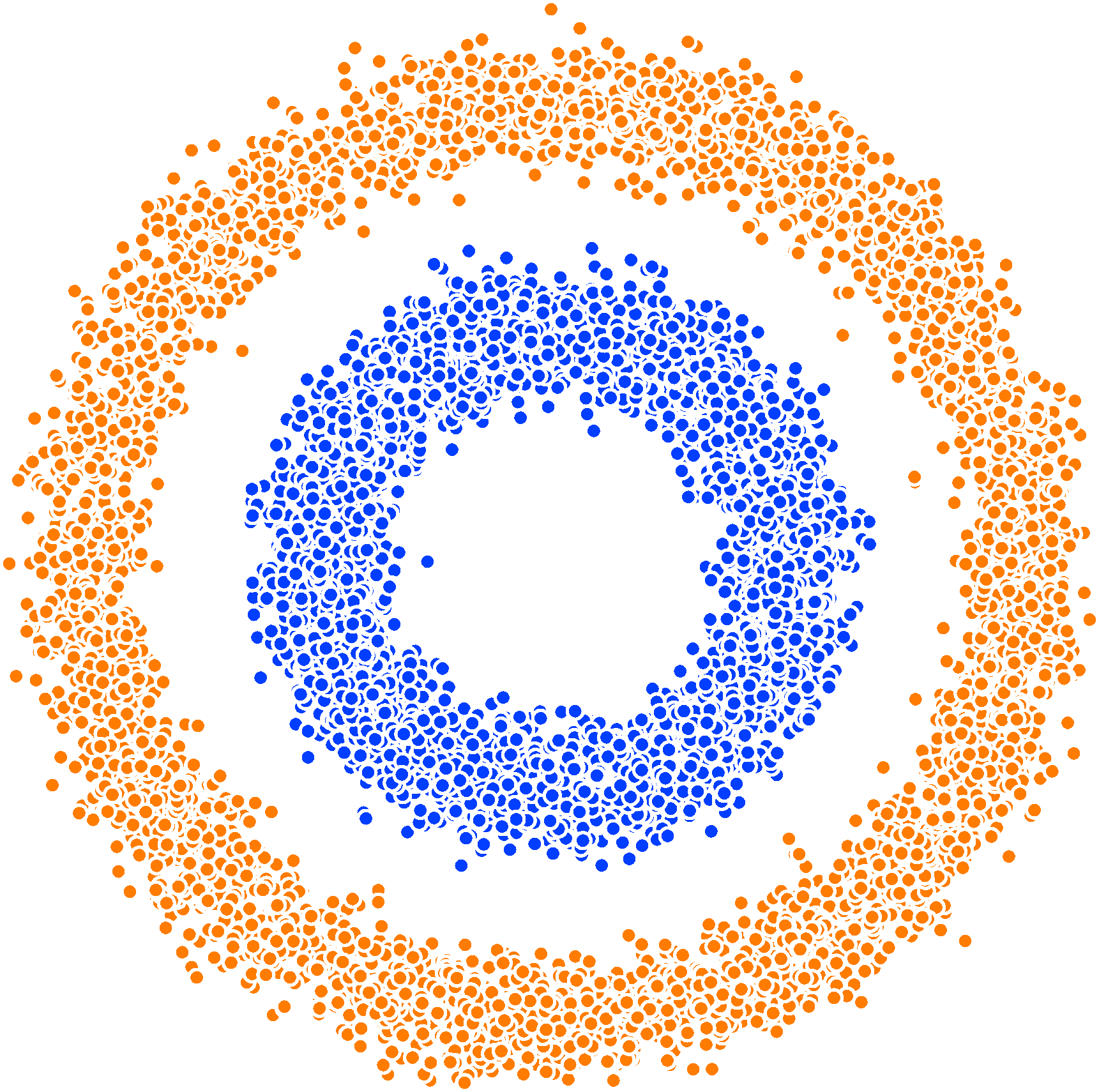}
    \vspace{8pt}
    \caption{Double rings (ACC=100.00\%)}
\end{subfigure}
\begin{subfigure}{0.4\linewidth}
    \centering
    \includegraphics[totalheight=0.6\linewidth,width=0.6\linewidth]{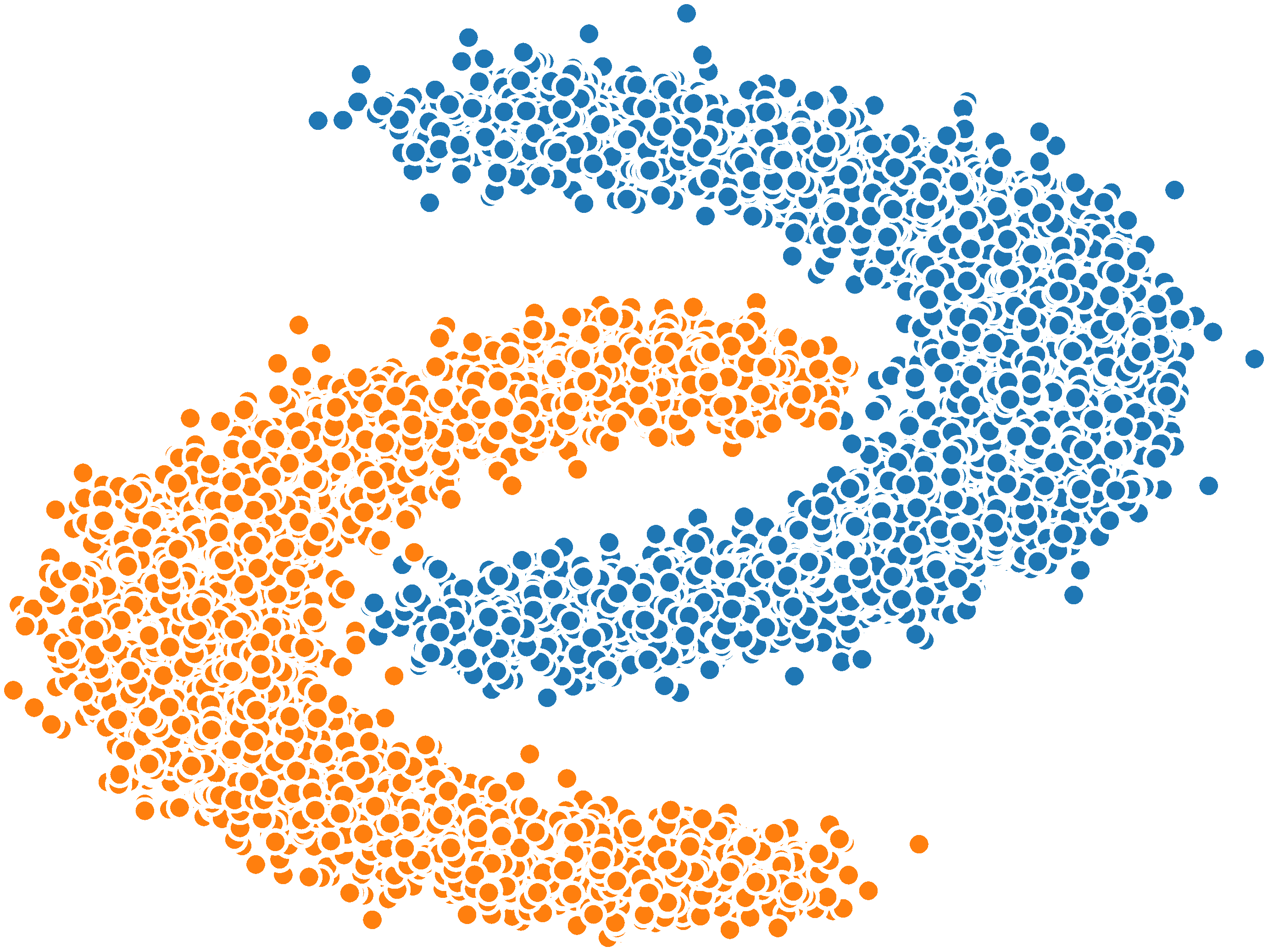}
    \vspace{8pt}
    \caption{Double C (ACC=99.90\%).}
\end{subfigure}
\caption{Visualization of NeuNcut predictions on large-scale synthetic data. 10,000 samples are plotted.}
\label{fig:visual}
\end{figure}

\myparagraph{On large-scale real data}
We compare the performance and total running time  of spectral clustering via Normalized cut (Ncut), SpectralNet~\cite{Shaham:ICLR18} and NeuNcut on MNIST8M, TinyImageNet-CLIP and ImageNet1k-CLIP. 
For MNIST8M, we construct affinities from a pretrained SENet \cite{Zhang:CVPR21-SENet}. 
For  TinyImageNet-CLIP and ImageNet1k-CLIP, we compute the heat kernel affinities.
Due to the computational bottleneck of eigen-decomposition, we report the average performance of Ncut over all subsets that contain $100,000$ samples.
Table \ref{tab:large_dataset} shows that NeuNcut achieves satisfactory clustering accuracy on three large-scale real datasets. Besides, NeuNcut does not need to perform $k$-means on the datasets and thus saves more time of the inference when compared to SpectralNet. 

\begin{table}

\centering
\fontsize{10pt}{11pt}\selectfont
\setlength{\tabcolsep}{0.75mm}{
\centering
    \begin{tabular}{l|cccc|cccc|cccc}
    \toprule
    \multirow{2}{*}{Methods} & \multicolumn{4}{c|}{MNIST8M} & \multicolumn{4}{c|}{{\mcb TinyImageNet-CLIP}} & \multicolumn{4}{c}{\mcb ImageNet1k-CLIP} \\
    & Time & ACC & NMI & ARI & Time & ACC & NMI & ARI & Time & ACC & NMI & ARI \\
    \midrule
    Ncut & 3.22$\times 10^3$ & 0.960 & 0.915 & 0.918 & 51.47 & 0.628 & 0.770 & 0.447 & 533.55 & 0.560 & 0.812 & 0.425 \\
    SpectralNet & 2.27 & 0.961 & 0.913 & 0.920 & 9.97 & 0.622 & 0.766 & 0.419 & 19.22 & 0.560 & 0.798 & 0.441 \\
    NeuNcut & \bf 1.36 & \bf 0.968 & \bf 0.923 & \bf 0.926 & \bf 5.40 & \bf 0.646 & \bf 0.781 & \bf 0.506 & \bf 5.23 & \bf 0.624 & \bf 0.813 & \bf 0.473 \\
    \bottomrule
    \end{tabular}
    }
\caption{\mcb Total running time (min.), ACC, NMI and ARI of Ncut, SpectralNet, and NeuNcut on MNIST8M, TinyImageNet-CLIP and ImageNet1k-CLIP.}
\label{tab:large_dataset}

\end{table}

\subsection{Ablation Studies}
\label{sec:ablation}
In this section, we provide ablation studies on the network size, size of mini-batch data and hyper-parameter $\gamma$.
We train NeuNcut on randomly selected 20,000 samples under the setting V (\ie, ScatNet based feature + self-expressiveness induced affinity) and then directly infer the cluster memberships.

\myparagraph{Effect of Network Size}
To evaluate how the network size affects the clustering performance, we conduct experiments on MNIST with varying the number of hidden layers (\ie, the layer depth) used for NeuNcut in the range of $\{1,2,3\}$ and varying the number of neurons in each hidden layer (\ie, the hidden dimension) in the range of $\{128,256,512,1024\}$. In these experiments, we set the batch size to 1000. 
Experimental results are provided in Figure \ref{fig:MNIST_model_design}, where we display the performance of ACC and NMI under different network size.
We can observe that the neural network with 
two hidden layers and each layer containing 
512 neurons can achieve the best clustering performance. Further increasing the 
depth does not help to improve the clustering performance, but requires more training time.
\begin{figure}[tbh]
\centering
    \begin{subfigure}{0.45\linewidth}
        \centering
        \includegraphics[width=0.85\linewidth]{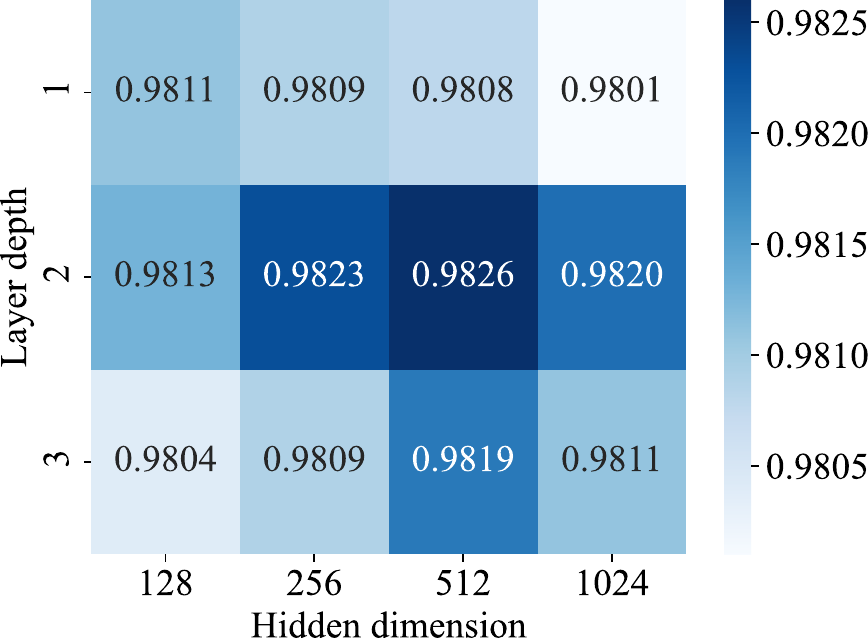}
        \vspace{8pt}
        \caption{ACC}
    \end{subfigure}
    \begin{subfigure}{0.45\linewidth}
        \centering
        \includegraphics[width=0.85\linewidth]{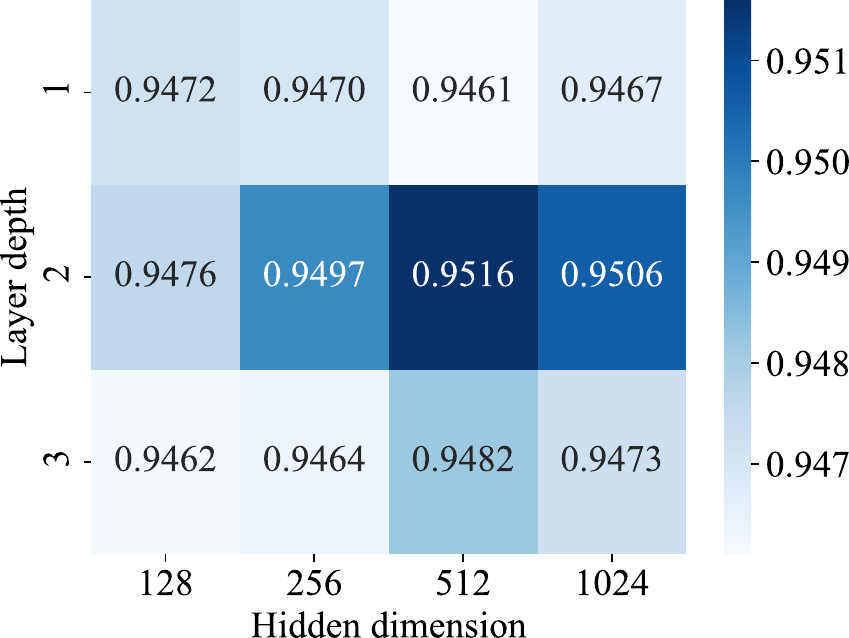}
        \vspace{8pt}
        \caption{NMI}
    \end{subfigure}

    \caption{Clustering performance of NeuNcut with varying network size on MNIST.}
    \label{fig:MNIST_model_design}
\end{figure}

\myparagraph{Effect of Batch Size}
{
The batch size is important for approximating the graph Laplacian $L$ with its mini-batch version.
To evaluate the impact of batch size, we train NeuNcut with varying batch size in the range of $\{50,100,200,\dots,1400\}$ and report the mean value of ACC over 3 trials.
As can be seen in Table \ref{tab:bs}, NeuNcut yields satisfactory results whenever the batch size is larger than 100 on MNIST.
For E-MNIST that contains more imbalanced categories, NeuNcut demands a batch size larger than 200 to yield satisfactory results.
}

\begin{table}

    \footnotesize
    \centering
    \begin{tabular}{c|c c c c c c c c c}
        \toprule
         Batch size & 50 & 100 & 200 & 400 & 600 & 800 & 1000 & 1200 & 1400\\
        \midrule
         MNIST & 0.751 & 0.973 & 0.978 & 0.980 & 0.982 & 0.980 & 0.983 & 0.977 & 0.981 \\
         {E-MNIST} & 0.463 &  0.477 &  0.684 &  0.709 &  0.711 &  0.708 &  0.716 &  0.709 & 0.713 \\
         \bottomrule
    \end{tabular}
\caption{Clustering accuracy of NeuNcut with varying batch size on MNIST and E-MNIST.}
\label{tab:bs}

\end{table}

\myparagraph{Effect of Hyper-parameter $\gamma$}
Note that Eq.~\eqref{eq:our_loss} introduce an additional hyper-parameter $\gamma >0$. We conduct experiments with NeuNcut under different $\gamma$ and show the results in Figure \ref{fig:weight}.
The clustering performance of our method is not sensitive to the parameter $\gamma$, \eg, NeuNcut achieves stable results on MNIST when $\gamma \in [80,240]$ and $\gamma \in [80,180]$ when it comes to F-MNIST.

\begin{figure}
\centering
\begin{subfigure}{0.4\linewidth}
    \centering
    \includegraphics[width=0.9\linewidth]{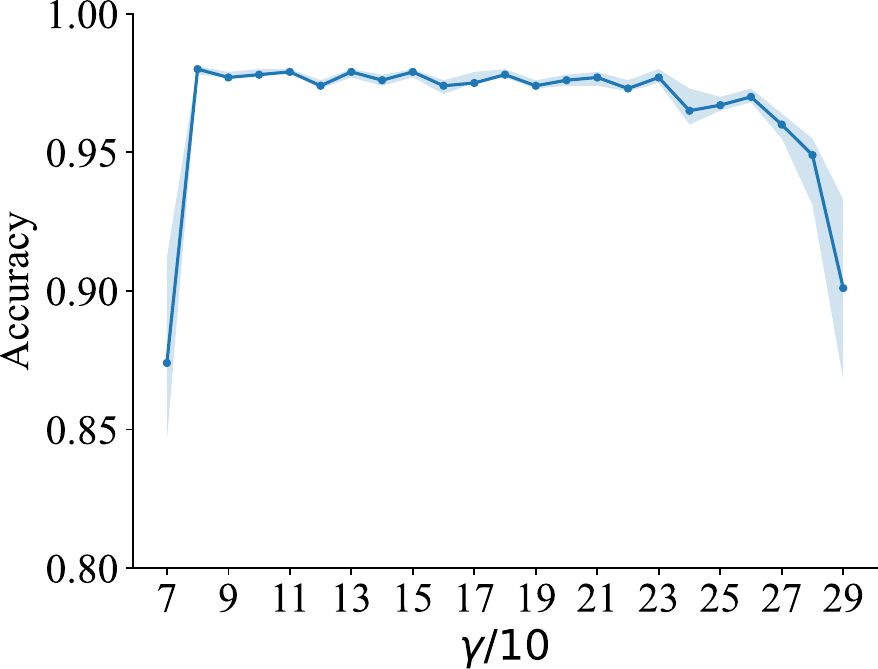}
    \caption{MNIST}
    \label{subfig:MNIST-weight}
\end{subfigure}
\begin{subfigure}{0.4\linewidth}
    \centering
    \includegraphics[width=0.9\linewidth]{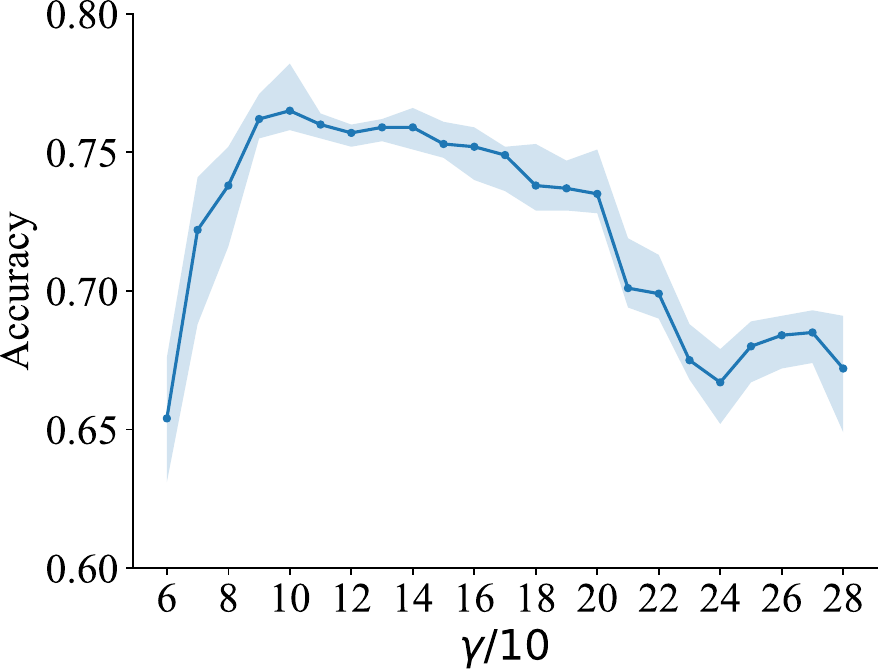}
    \caption{F-MNIST}
\end{subfigure}
    \caption{Clustering accuracy (mean$\pm$std) of NeuNcut with varying $\gamma$ on MNIST and F-MNIST.}
    \label{fig:weight}
\end{figure}

\subsection{Searching Best $\gamma$ Without Labels}
\label{sec:gamma_search}

Since label information is not available in training, it is improper to search for the best hyper-parameters by checking the clustering accuracy. To this end, we provide a practical way to find the best $\gamma$ without using the ground-truth labels.
Note that the loss function in Eq.~\eqref{eq:our_loss} consists of two terms, \ie, $\mathcal L_{Lap}$ and $\mathcal L_{orth}$. 
We observe some fascinating connections between the optimal loss and the parameter $\gamma$.
Taking the experiment on {MNIST} as an example, we train our NeuNcut with different $\gamma$ and record the optimal $\mathcal L_{Lap}$ and $\mathcal L_{orth}$ during the entire training process.
As can be seen in Figure~\ref{fig:gamma_threshold}, when $\gamma$ is small, $\mathcal L_{orth}$ is larger than its recorded minimum, while $\mathcal L_{Lap}$ is equal to $0$ or very small. In this case our NeuNcut will produce a collapsed solution.
When $\gamma$ reaches a threshold, $\mathcal L_{orth}$ approaches to its minimum and in this case NeuNcut can produce satisfactory clustering results.
Further increasing the $\gamma$ slightly increases the value of $\mathcal L_{Lap}$ while the value of $\mathcal L_{Lap}$ is unchanged.
Finally, an arbitrary large $\gamma$ will makes $\mathcal L_{Lap}$ hard to be optimized and harms the clustering accuracy.

Based on these observations, we suggest an empirical rule to find the best $\gamma$. 
Specifically, we start from a very large $\gamma$ (\eg, $\gamma=10^6$) to train NeuNcut and record the optimal $\mathcal L^{(o)}_{orth}$ (\ie, where $\mathcal L^{(o)}_{orth}=0.256$ in this case)---which can be regarded as the lower bound of $\mathcal L_{orth}$.
Then, we gradually decrease the value of $\gamma$ and find the threshold of $\gamma$ that corresponds to the lowest one among the optimal $\mathcal L_{Lap}$ while still keeping $\mathcal L_{orth}=0.256$.
This estimated threshold sets the lower bound of feasible $\gamma$ and is consistent with the real feasible $\gamma$, \ie, $\gamma\geq 80$ as can be validated by Figure~\ref{subfig:MNIST-weight}.

\begin{figure}
\centering
\begin{subfigure}{0.4\linewidth}
    \centering
    \includegraphics[width=0.9\linewidth]{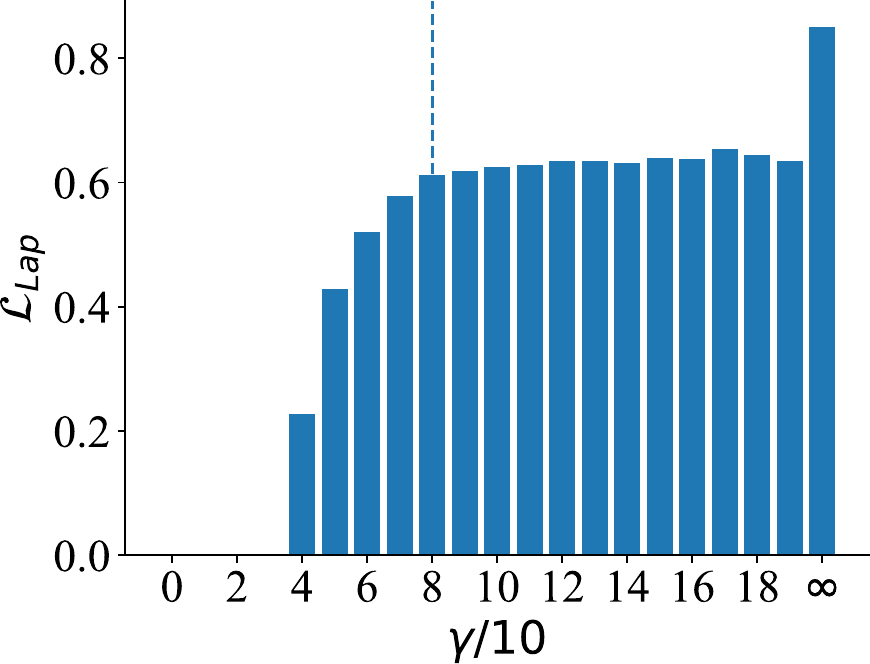}
    \caption{$\mathcal L_{Lap}$}
\end{subfigure}
\begin{subfigure}{0.4\linewidth}
        \centering
        \includegraphics[width=0.9\linewidth]{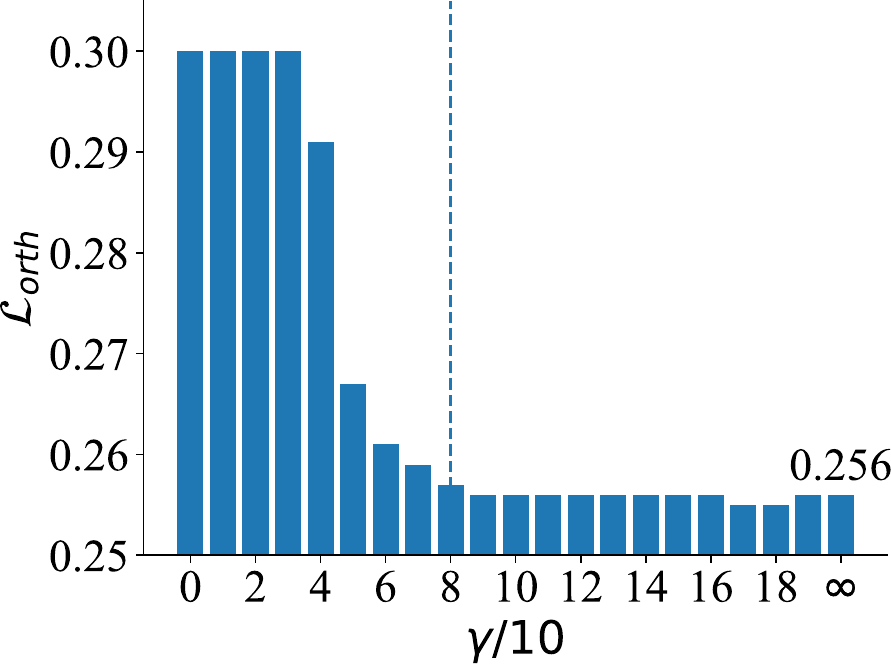}
        \caption{$\mathcal L_{orth}$}
\end{subfigure}
\caption{Showing the optimal $\mathcal L_{Lap}$ in panel (a) and the optimal $\mathcal L_{orth}$ in panel (b) with a varying penalty weight $\gamma$ in \eqref{eq:our_loss} when training on MNIST.}
\label{fig:gamma_threshold}
\end{figure}

\subsection{Computation Cost and Running Time}

\begin{table}[tbh]

\fontsize{10pt}{11pt}\selectfont
    \centering
    \begin{tabular}{l|c|c|c|c}
        \toprule
        Methods & MNIST & F-MNIST & E-MNIST & CIFAR-10\\
        \midrule
        Ncut & 283 & 334 & 869 & 312 \\
        \midrule
        NeuNcut (5000) & 11 & 47 & 43 & 21\\
        NeuNcut (10000) & 13 & 58 & 51 & 40\\
        NeuNcut (20000) & 25 & 92 & 97 & 101 \\
        NeuNcut (50000) & 68 & 258 & 249 & 211 \\
        \bottomrule
    \end{tabular}
    \caption{Total running time~(sec.) of Ncut and NeuNcut ($N$) on MNIST, F-MNIST, E-MNIST and CIFAR-10. Ncut is performed on the entire dataset and $N$ denotes the number of training samples. }
    \label{tab:time}
\end{table}

\myparagraph{Computation Complexity}
Considering a dataset containing $n$ data points, the common strategy for spectral clustering is to sparsify the affinity at first, \eg, keeping the largest $s$ entries in each row.
In such a setting, the time complexity is $\mathcal O (kns)$ for solving the $k$ eigenvectors with sparse eigen-solver.
While in NeuNcut, the time complexity of the loss in Eq.~\eqref{eq:our_loss} is $\mathcal O (2tkm^2)$, where $t$ denotes the number of training iterations and $m$ denotes the batch size.

\myparagraph{Running Time}
We compare the total running time of NeuNcut with varying number of training samples and the running time of spectral clustering via Ncut on the entire dataset MNIST, F-MNIST, E-MNIST and CIFAR-10.
We use an Intel(R) Xeon E5-2630 CPU to solve spectral clustering via Ncut since that there is no available GPU acceleration package.
The NeuNcut is trained on a single NVIDIA GeForce 1080Ti GPU.
As shown in Table \ref{tab:time}, the NeuNcut takes much less running time since it can be trained on a small training set to perform generalizable clustering results.
Besides, NeuNcut infers the cluster memberships directly, which saves the time of applying $k$-means clustering. 

\subsection{Learning Curves}

\begin{figure}[t]
    \centering
    \begin{subfigure}{0.4\linewidth}
        \centering
        \includegraphics[width=0.8\linewidth, height=0.6\linewidth]{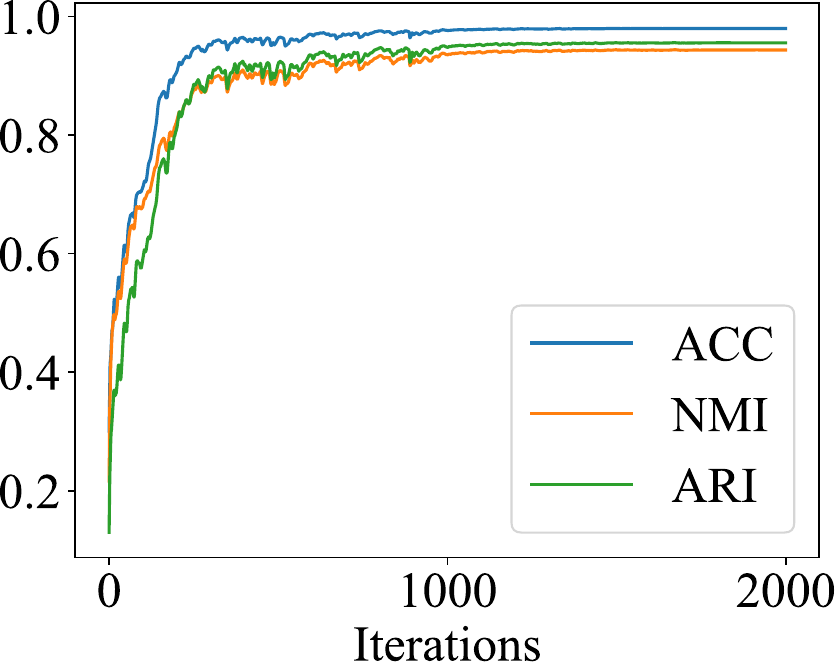}
        \caption{Metrics}
    \end{subfigure}
    \begin{subfigure}{0.4\linewidth}
        \centering
        \includegraphics[width=0.8\linewidth, height=0.6\linewidth]{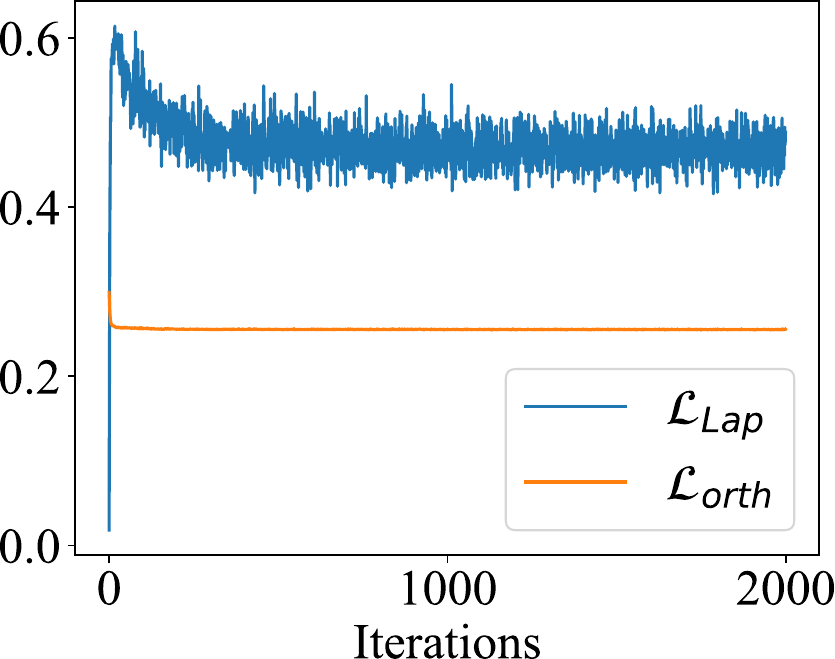}
        \caption{$\L_{Lap}$ and $\L_{orth}$}
    \end{subfigure}
  \caption{Clustering performance curves and loss curves of during training NeuNcut on MNIST.}
  \label{fig:mnist_learning_curves}
\end{figure}

In Figure \ref{fig:mnist_learning_curves}, we plot the clustering performance (ACC, NMI and ARI metrics) curves as well as the loss curves $\L_{Lap}$ and $\L_{orth}$ as defined in Eq.~\eqref{eq:our_loss} during training on MNIST.
We can observe that our NeuNcut converges and obtains the stable clustering results on MNIST within 1,000 training iteration.
The penalty term $\L_{orth}$ decrease steadily to their lower bounds, while the loss term $\L_{Lap}$ rapidly increases from 0 and then slightly decreases during the training.

\begin{table}

\fontsize{10pt}{11pt}\selectfont
\centering
    \begin{tabular}{l|lll|lll}
    \toprule
    \multirow{2}{*}{Methods} &  \multicolumn{3}{c|}{MNIST} & \multicolumn{3}{c}{F-MNIST} \\
     & ACC & NMI & ARI & ACC & NMI & ARI\\
    \midrule
    Ncut ($s$=1000) & 0.798 & 0.812 & 0.727 & 0.641 & 0.650 & 0.480\\
    Ncut ($s$=10) & 0.827 & 0.874 & 0.810 & 0.458 & 0.445 & 0.212\\
    Ncut ($s$=3) & 0.721 & 0.818 & 0.708 & 0.405 & 0.368 & 0.159\\
    \midrule
    SpectralNet ($s$=1000) & 0.795 & 0.815 & 0.744 & 0.667 & 0.639 & 0.524\\
    SpectralNet ($s$=10) & 0.831 & 0.882 & 0.813 & 0.694 & 0.670 & 0.544\\
    SpectralNet ($s$=3) & 0.841 & 0.901 & 0.960 & 0.582 & 0.685 & 0.485\\
    \midrule
    NeuNcut ($s$=1000) & 0.981 & 0.945 & 0.957 & 0.783 & 0.725 & 0.643\\
    NeuNcut ($s$=10) & 0.982 & 0.947 & 0.959 & 0.764 & 0.709 & 0.618\\
    NeuNcut ($s$=3) & 0.983 & 0.952 & 0.960 & 0.758 & 0.701 & 0.606\\
    \bottomrule
    \end{tabular}
\caption{Performance of spectral clustering with Ncut, SpectralNet and NeuNcut on MNIST and F-MNIST with varying $s$, which is the number of nonzero affinity entries kept in each row.}
\label{tab:knn}

\end{table}

\subsection{Evaluation on Sparsity of Affinity}
Because of the memory bottleneck, spectral clustering have to reserve only $s$ largest entries of each row to construct a sparse affinity matrix.
However, setting all other entries to zeros may affect the clustering performance.
Here, we compares the clustering performance of Ncut, SpectralNet and NeuNcut with $s\in \{3,10,1000\}$.
Table \ref{tab:knn} shows that the number of $s$ severely affects the clustering performance of spectral clustering and SpectralNet.
Moreover, since the optimal number of $s$ varies from different dataset, it is nearly impossible to find best $s$ when ground truth is missing or unknown. For example, SpectralNet achieves best clustering results for MNIST when $s=3$ but achieves worst results for F-MNIST with same $s$. 
By contrast, NeuNcut shows robustness to varying $s$.

\subsection{Comparison to State-of-the-art Methods}

\begin{table}[tbh]
\fontsize{9pt}{10pt}\selectfont
\centering
\setlength{\tabcolsep}{1.1mm}{
    \begin{tabular}{l|ccc|ccc|ccc|ccc}
    \toprule
    \multirow{2}{*}{Methods} & \multicolumn{3}{c|}{MNIST} & \multicolumn{3}{c|}{F-MNIST} & \multicolumn{3}{c|}{E-MNIST} & \multicolumn{3}{c}{CIFAR-10} \\
    & ACC & NMI & ARI & ACC & NMI & ARI & ACC & NMI & ARI & ACC & NMI & ARI \\
    \midrule
    $k$-means \cite{MacQueen-1967} & 0.541 & 0.507 & 0.367 & 0.505 & 0.578 & 0.403 & 0.459 & 0.438 & 0.316 & 0.525 & 0.589 & 0.276 \\
    VaDE \cite{Jiang:IJCAI2016-VaDE} & 0.963 & 0.912 & 0.922 & 0.604 & 0.641 & 0.477 & 0.561 & 0.694 & 0.518 & - & - & -  \\
    EnSC \cite{You:CVPR16-EnSC} & \underline{0.980} & \underline{0.945} & \underline{0.957} & 0.672 & 0.705 & 0.565 & T & T & T & 0.613 & 0.601 & 0.430 \\
    DEPICT \cite{Ghasedi:ICCV17-DEPICT} & 0.965 & 0.917 & - & 0.392 & 0.392 & - & - & - & - & - & - & - \\
    ESC \cite{You:ECCV18} & 0.971 & 0.925 & 0.936 & 0.668 & 0.708 & 0.556 & \textbf{0.732} & \textbf{0.825} & 0.759 & 0.653 & 0.629 & 0.438 \\
    SCAN \cite{Van:ECCV2020-SCAN} & 0.969 & 0.916 & 0.929 & 0.538 & 0.575 & 0.363 & 0.567 & 0.652 & 0.545 & 0.756 & 0.633 & \underline{0.577} \\
    SENet \cite{Zhang:CVPR21-SENet} & 0.968 & 0.918 & 0.931 & 0.697 & 0.663 & 0.556 & \underline{0.721} &  \underline{0.798} & \underline{0.766} & \underline{0.765} & \textbf{0.655} & 0.573 \\
    EDESC \cite{Cai:CVPR2022-EDESC} & 0.913 & 0.862 & - & 0.631 & 0.670 & - & - & - & - & 0.627 & 0.464 & - \\
    \midrule
    Ncut \cite{Shi:IEEE2000-Ncut} & 0.854 & 0.904 & 0.837 & 0.645 & \textbf{0.726} & 0.448 & 0.662 & 0.769 & 0.654 & 0.693 & 0.636 & 0.428 \\
    SpectralNet \cite{Shaham:ICLR18} & 0.971 & 0.924 & 0.934 & 0.694 & 0.670 & 0.544 & 0.556 & 0.750 & 0.556 & 0.728 & 0.624 & 0.546 \\
    {\mcb SpecNet2} \cite{Chen:MSML2022-SpecNet2}  & 0.974 & 0.937 & 0.940 & 0.680 & 0.676 & 0.542 & 0.570 & 0.753 & 0.575 & 0.696 & 0.641 & 0.531 \\
    {\mcb AutoSC} \cite{Fan:NIPS2022-AutoSC} & 0.978 & - & - & 0.646 & - & - & - & - & - & - & - & - \\
    CNC \cite{Nazi:arXiv2019-CNC} & 0.972 & 0.924 & - & - & - & - & - & - & - & 0.702 & 0.586 & - \\

    NeuNcut (ours) & \textbf{0.983} & \textbf{0.952} & \textbf{0.960} & \textbf{0.783} & \underline{0.725} & \textbf{0.643} &  0.716 & 0.789 & \textbf{0.774} & \textbf{0.776} & \underline{0.647} & \textbf{0.594} \\
    NeuRcut (ours) & 0.978 & 0.938 & 0.951 & \underline{0.713} & 0.693 & \underline{0.587} & 0.661 & 0.610 & 0.595 & 0.759 & 0.640 & 0.566 \\
    \bottomrule
    \end{tabular}}

\caption{\mcb Clustering results on MNIST, F-MNIST, E-MNIST and CIFAR-10. We compare our method with most relevant spectral clustering methods and other baseline clustering methods. Legend: `-' denotes not reported results, `T' means the computation time exceeds 24 hours.}
\label{tab:sota}
\end{table}

{
We compare the clustering performance of our NeuNcut to the following baselines 
that are most relevant methods to our NeuNcut\footnote{BaSiS learns spectral embeddings in a \textit{supervised} manner and thus it is not being compared.}, including SpectralNet \cite{Shaham:ICLR18}, SpecNet2~\cite{Chen:MSML2022-SpecNet2}, AutoSC~\cite{Fan:NIPS2022-AutoSC} and CNC \cite{Nazi:arXiv2019-CNC},
}
conventional clustering methods including $k$-means \cite{MacQueen-1967} and Normalize cut (Ncut) \cite{Shi:IEEE2000-Ncut},
a set of competitive clustering methods including VaDE \cite{Jiang:IJCAI2016-VaDE}, DEPICT \cite{Ghasedi:ICCV17-DEPICT}, SCAN \cite{Van:ECCV2020-SCAN} and EDESC \cite{Cai:CVPR2022-EDESC},
and a set of advanced subspace clustering methods including EnSC \cite{You:CVPR16-EnSC},  ESC \cite{You:ECCV18} and SENet \cite{Zhang:CVPR21-SENet}.
Among the relevant baselines, we reproduce spectral clustering via Ncut, SpectralNet, SpecNet2 and SCAN and report their best performances on the same features and affinities, and cite the results of AutoSC\footnote{AutoSC uses ScatNet for feature extraction and constructs the affinity via a self-expressive model, thus it is a fair comparison to ours by citing the results from its paper.} 
and CNC\footnote{CNC is a relevant approach to ours, but we failed to reproduce its performance.} 
Experimental results are reported in Table~\ref{tab:sota}.
Among these baselines, DEPICT and SCAN use an entropy based regularizer to avoid degenerated solutions. The success of these methods is owing to class-balanced prior, \ie, assuming all clusters have equal number of data points. These methods perform poorly on imbalanced datasets such as E-MNIST.
State-of-the-art subspace clustering methods including EnSC, ESC, SENet and EDESC generate self-expressive coefficients and apply spectral clustering. Our NeuNcut can be used to replace the spectral clustering step in most subspace clustering methods to achieve better performance and enable them to handle large-scale datasets (\eg, see ``SENet'' and ``NeuNcut'' in Table~\ref{tab:sota} for comparison).
As a differential spectral clustering method, our NeuNcut outperforms Ncut and other spectral-based clustering methods on all four datasets.
In particular, our NeuNcut outperforms all baseline methods on MNIST, F-MNIST and CIFAR-10, and achieves second highest accuracy on E-MNIST.

\subsection{Flexible Extensions}
\label{sec:extension}
Our NeuNcut can be easily extended to other conventional spectral clustering methods, such as Ratio cut \cite{Chan:IDAC1993-Rcut}:
\begin{equation}
\label{eq:rcut}
\mathrm{Ratio\ cut}(\V^{(1)},\ldots,\V^{(k)}) := \sum_{\ell=1}^k \frac{\mathrm{cut}(\V^{(\ell)},\overline{\V}^{(\ell)})}{|\V^{(\ell)}|},
\end{equation}
where $|\V^{(\ell)}|$ denotes the size 
of $\ell$-th cluster. Similarly, the desired segmentation matrix $\tilde H$ of the Ratio cut can again be expressed as $\tilde H := H \Upsilon^{-1}$,  
where $H$ is reparametrized by $\mathbf f(X; \Theta)$ and 
$\Upsilon = \mathrm{Diag} (|\V^{(1)}|,\cdots,|\V^{(k)}|)^{1/2}$. 
Here, the size of $\ell$-th cluster can also be replaced by its estimation $|\V^{(\ell)}| \doteq \sum_{i=1}^n y_{i,\ell}$. 
Again, this is an analogue to the expectation step in EM-style algorithm. 
Then, we relax the orthogonality constraint of Ratio cut (\ie, $\tilde H^\top \tilde H=I$) to a penalty function and obtain the following loss function:
\begin{equation}
\begin{split}
        \mathcal{L}(X,W;\Theta) :=&  \mathrm{\trace} \left ((\mathbf f(X; \Theta) \Upsilon^{-1})^\top L (\mathbf f(X; \Theta) \Upsilon^{-1}) \right )\\
        +& \frac{\gamma}{2} \left \|(\mathbf f(X; \Theta) \Upsilon^{-1})^\top (\mathbf f(X; \Theta) \Upsilon^{-1})-I \right \|_F^2. 
\end{split}
\end{equation}

We note that SpectralNet \cite{Shaham:ICLR18} is not a fully differential programming approach for Ratio cut because $k$-means is still needed after the spectral embedding.
Here, we have a fully differential programming approach for Ratio cut, termed as \textbf{NeuRcut}.
We conduct experiments with the same setting as NeuNcut and report the results in Table \ref{tab:sota}. Again, our NeuRcut outperforms SpectralNet. 

\section{Conclusion}
\label{sec:conclusion}
We proposed a differential and generalizable approach for spectral clustering, termed Neural Normalized Cut (NeuNcut), which can be trained in mini-batch mode and used to infer the clustering membership for out-of-sample data directly. 
Such a generalization ability provides an efficient and effective way for clustering large-scale data. 
Extensive experiments on both synthetic data and real-world datasets have validated the superior performance of our proposed NeuNcut.

\myparagraph{Limitations}
Our NeuNcut maps the data to the cluster assignment space, not the orthogonal eigenfunctions space.
That means, the output of NeuNcut can not approximate the eigenvectors obtained by eigen-decomposition, which prevents potential applications on eigenvectors or spectral embedding, such as the approximation of the Fiedler vector and the positional encoding for graph neural networks.

\myparagraph{Future works}
It is worth to note that NeuNcut is a differential version of Normalized cut \cite{Shi:IEEE2000-Ncut}, hence it can potentially be used to replace the conventional spectral clustering in a variety of applications, especially when facing the clustering task with data of ultra large-scale.
Nevertheless, we also note that in our NeuNcut the feature and the affinity are still assumed to be given and fixed, thus it will be a worthwhile future work to develop a unified framework for jointly learning both the feature and the affinity. In addition, we have observed empirically that our NeuNcut enjoys a good generalization ability to out-of-sample data,
thus it is also an attempting future work to establish the relevant theoretical guarantee.

\section*{Acknowledgment} W. He, C.-G Li and J. Guo are supported by the National Natural Science Foundation of China under Grant 61876022.

{
\small
\bibliographystyle{elsarticle-num}
\bibliography{main}
}

\end{document}